
\documentclass[letterpaper, 10 pt, conference]{ieeeconf}  

\IEEEoverridecommandlockouts                              

\overrideIEEEmargins                                      




\usepackage{amsmath,amsthm,amssymb}
\usepackage{graphicx,fixltx2e}
\usepackage{hyperref}
\usepackage{xy}
\usepackage{bbm}
\usepackage{todonotes}
\usepackage{float}
\usepackage{cite}
\usepackage{algorithm}
\usepackage[noend]{algpseudocode}
\usepackage{subcaption}
\usepackage{multicol}
\makeatletter
\def\algbackskip{\hskip-\ALG@thistlm}
\makeatother

\newcommand{\N}{\mathbb{N}}

\newcommand{\tbx}{\textbf{x}}

\theoremstyle{remark}

\newtheorem{problem}{Problem}
\newtheorem{lemma}{Lemma}

\newtheorem{assumption}{Assumption}

\newtheorem{theorem}{Theorem}
\newtheorem{corollary}{Corollary}
\newtheorem{proposition}{Proposition}

\makeatletter
\newcommand{\pushright}[1]{\ifmeasuring@#1\else\omit\hfill$\displaystyle#1$\fi\ignorespaces}
\newcommand{\pushleft}[1]{\ifmeasuring@#1\else\omit$\displaystyle#1$\hfill\fi\ignorespaces}
\makeatother

\allowdisplaybreaks

\title{\LARGE \bf
Local Trajectory Stabilization for Dexterous Manipulation via Piecewise Affine Approximations
}

\author{Weiqiao Han and Russ Tedrake
\thanks{Computer Science and Artificial Intelligence Laboratory, Massachusetts Institute of Technology, 77 Massachusetts Avenue, Cambridge, MA 02139, USA. {\tt\small weiqiaoh,russt@mit.edu}
}%
\thanks{This work was supported by Air Force/Lincoln Laboratory Award No. PO\# 7000374874, and Lockheed Martin Corporation Award No. RPP2016-002.}
}

\begin{document}

\newcounter{eqn}

\maketitle
\thispagestyle{empty}
\pagestyle{empty}

\begin{abstract}
We propose a model-based approach to design feedback policies for dexterous robotic manipulation. The manipulation problem is formulated as reaching the target region from an initial state for some non-smooth nonlinear system. First, we use trajectory optimization to find a feasible trajectory. Next, we characterize the local multi-contact dynamics around the trajectory as a piecewise affine system, and build a funnel around the linearization of the nominal trajectory using polytopes. We prove that the feedback controller at the vicinity of the linearization is guaranteed to drive the nonlinear system to the target region. During online execution, we solve linear programs to track the system trajectory. We validate the algorithm on hardware, showing that even under large external disturbances, the controller is able to accomplish the task.  
\end{abstract}

\section{Introduction}
How to enable robots to manipulate objects dexterously like human hands do is a long-standing problem \cite{mason2018toward}. 
Alongside hardware, perception, and planning, motor control is one of the challenges in manipulation.
Designing reliable feedback controllers for manipulation is hard, due to the nonlinear and contact-rich nature of the manipulation tasks.
For example, how should we design a controller for the robot to flip a carrot (half-cylinder) with the flat surface facing upwards (Fig. \ref{fig:carrot}) to the pose where the flat surface is facing downwards?
This is more than a simple pick-and-place task and indeed even an experienced human operator tele-operating the robot often cannot succeed in one or two tries (see the accompanying video).

In general, there are two main categories of approaches to control design for manipulation -- model-based approaches and learning-based approaches. Model-based approaches have mainly been applied to grasping \cite{li1988task,miller2003automatic}, and planar pushing \cite{chavan-dafle2018rss,zhou2016convex}. 
They are usually specific to the hardware. Most model-based controllers are open loop \cite{dafle2014extrinsic}. 
In order to achieve the dexterity of a human hand, we want the robot to do tasks more complicated than grasping and planar pushing in a feedback fashion.

On the other end of the spectrum, learning-based approaches have been applied to tasks with greater variety and difficulty, ranging from moving the end-effector to a target pose \cite{levine2015learning}, grasping \cite{levine2018learning}, and planar pushing \cite{finn2017deep},  to rotating a long rod \cite{kumar2016optimal}, throwing objects \cite{zeng2019tossingbot}, and rotating a cube \cite{andrychowicz2018learning}. 
Though learning-based approaches have been successful on tasks that model-based approaches have not been able to solve, they lack reliability or stability guarantees.
Motions of manipulators are unpredictable, especially for those whose policies are represented by deep neural networks. 
Judging the usefulness of such control policies is always empirical.

\begin{figure}[t]
\centering
\includegraphics[width=0.4\textwidth,angle=-90]{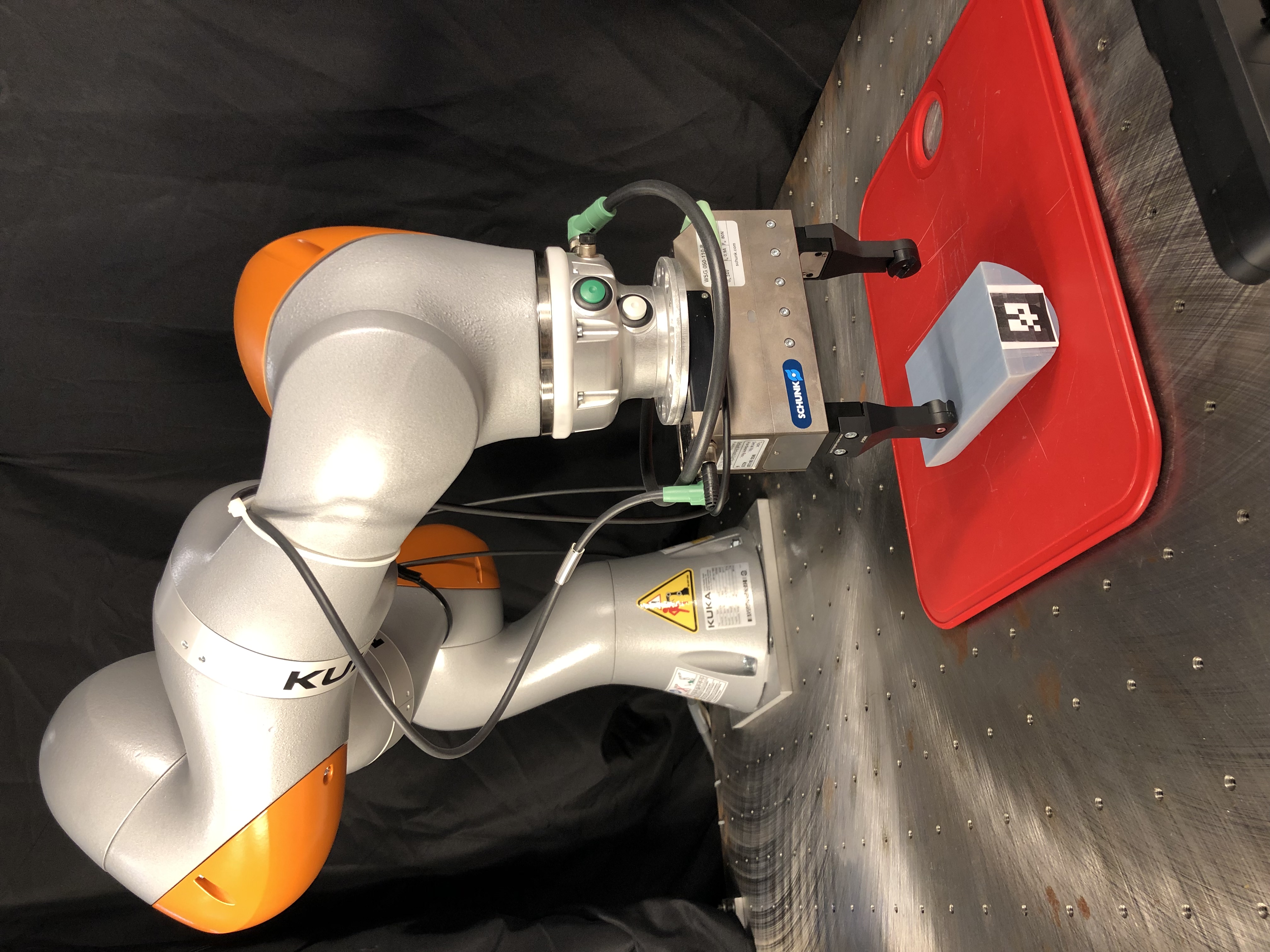}
\includegraphics[width=0.2\textwidth]{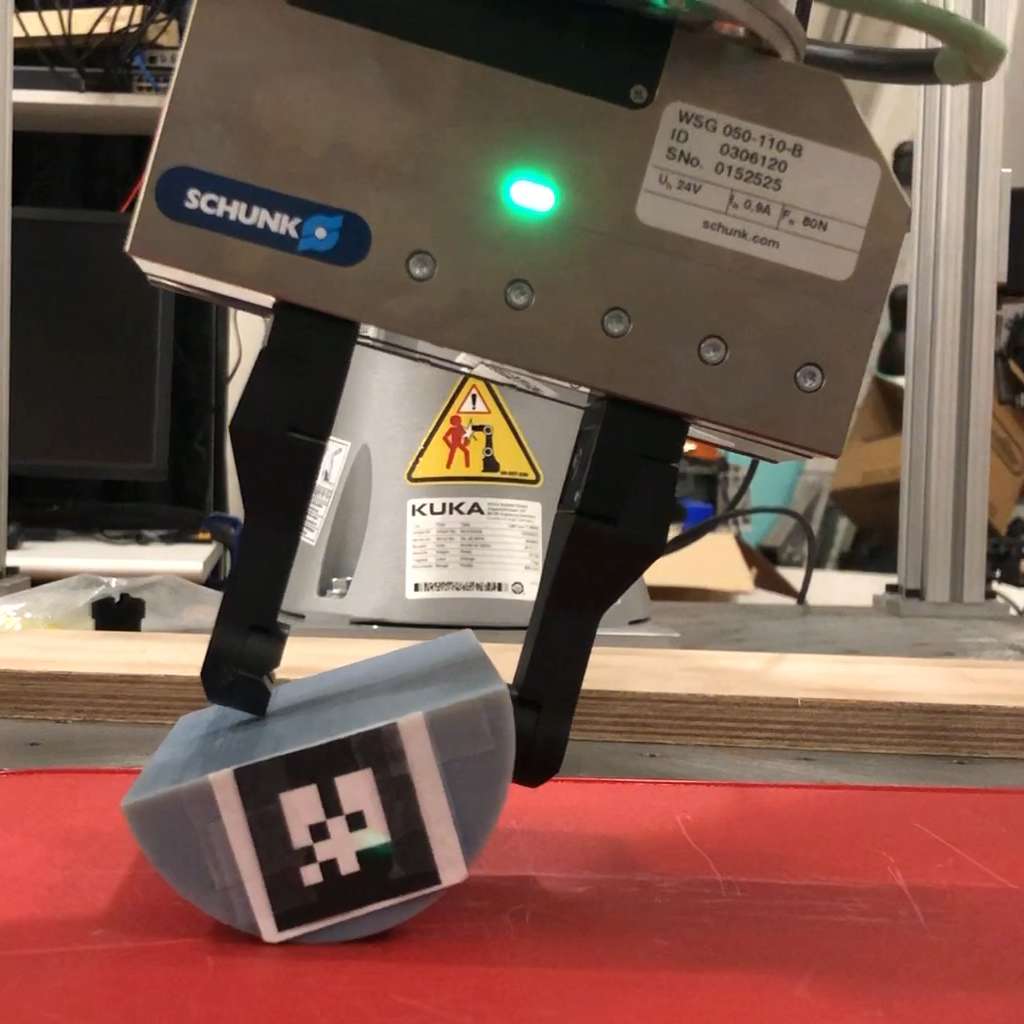}
\includegraphics[width=0.2\textwidth]{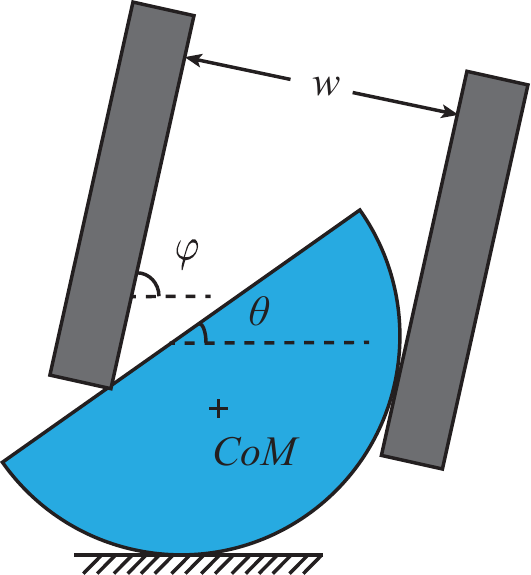}
\caption{Flipping a carrot (half-cylinder) using a parallel gripper}
\label{fig:carrot}
\end{figure}

Our approach falls into the first category.
We present a general algorithm of feedback design for dexterous manipulation.
We draw ideas from recent advances in humanoid robots path planning \cite{kuindersma2016optimization} and robust control synthesis \cite{majumdar2017funnel,sadraddini2019sampling}.
We formulate the manipulation problem as using a manipulator to change the object pose from its initial pose to a target pose.
Our algorithm consists of the following steps.
First, the pose of the object and that of the robot are incorporated into the system state, and a trajectory optimization method that has been used to design nominal trajectories for humanoid robots walking over uneven terrains is deployed to design a nominal trajectory for manipulating the object.
Second, piecewise-affine (PWA) linearization around the nominal trajectory is computed and a funnel around the nominal mode of the PWA linearization is formed.
Third, a linear program (LP) is solved at each time step during online execution, achieving real time feedback control. 
The algorithm can be implemented on common robotic platforms.
It can be applied to more complicated tasks than grasping and planar pushing, and in particular, it can flip the half-cylinder.
The controller is robust to moderate external disturbances.
Our contributions are (1) designing a feedback control algorithm for dexterous manipulation, bridging the gap between locomotion/UAV control and robotic manipulation; (2) providing robustness guarantees for our algorithm (Proposition \ref{prop1}); (3) validating the algorithm on hardware.

\section{Related Work}
\textit{Model-based feedback control for manipulation}:
Lynch's group used feedback control for sliding \cite{shi2017dynamic}, rolling \cite{ryu2013control}, vibratory manipulation \cite{umbanhowar2012effect,vose2012manipulation}, and hybrid manipulation using motion primitives \cite{woodruff2017planning,dafle2014extrinsic}.
They designed specific manipulators for specific tasks.
In contrast, our algorithm is more general and can be carried out on common robot platforms.
Rodriguez's group used feedback control for planar pushing \cite{hogan2018reactive}.
They linearize the nonlinear system and solve linear model predictive control (MPC) online to plan the trajectory.

\textit{Path planning}:
There are generally two categories of approaches to path planning.
One is motion planning algorithms \cite{lavalle2006planning}. 
For discretized configuration space, grid search algorithms such as A$^*$ and its variants \cite{likhachev2005anytime} are widely used.
Sampling-based algorithms such as rapidly exploring random trees \cite{shkolnik2010sample}  are common approaches for continuous configuration space. 
The other category is the trajectory optimization algorithms.
A common approach in this category would be to formulate the problem as nonlinear optimization programs and solve it using off-the-shelf numerical solvers \cite{posa2014direct,dai2014whole,mordatch2012discovery}.
Other approaches in this category include augmented Lagrangian \cite{toussaint2014novel}, mixed-integer convex optimization (MICP) \cite{valenzuela2016mixed}, differential dynamic programming (DDP) \cite{tassa2014control}, and iterative linear quadratic Gaussian (iLQG) \cite{tassa2012synthesis}. 
A combination of these two methods have proved even more useful \cite{dolgov2010path,zhang2018autonomous}. 
In this work, we use trajectory optimization and use off-the-shelf numerical solvers to solve a nonlinear optimization program.

\textit{Local feedback controllers}: In control literature, it is quite standard to track a system trajectory using linear quadratic servo (LQ servo) or time-varying linear quadratic regulator (TVLQR) based on linearization of the system trajectory around nominal states \cite{anderson2007optimal}.
In reinforcement learning, it is common to learn local linear models of the system and linear feedback control gains \cite{levine2015learning,kumar2016optimal}.
However, the local linear model does not fully capture the contact-rich nature of manipulation. 
Robot fingers may make and break multiple contacts with the object due to small disturbances. 
When contact modes change, the dynamics may change.
So it is more natural to model the local dynamics as a PWA system, i.e., a system whose state-input space is partitioned into several polytopic regions, with each region
associated with a different affine dynamics equation.

However, stabilizing PWA systems alone is not an easy problem. 
Explicit solutions of optimal control for PWA systems can be computed offline by multi-parametric  programming \cite{baoti2006constrained, christophersen2005optimal,  borrelli2005dynamic, baric2008efficient, bemporad2000optimal,bemporad2000piecewise,bemporad2002optimal,mayne2002optimal}.
The computational complexity of these methods grows exponentially with respect to the number of time steps.
Lyapunov-based approaches \cite{rodrigues2002dynamic,lazar2006model,han2017feedback} and occupation measure approaches \cite{zhao2017optimal,han2019controller} do not depend on the number of time steps, but are quite conservative and may not always find solutions.
Sampling-based methods \cite{marcucci2017approximate,sadraddini2019sampling} suffer from the issue of scalability.
In this work, we consider manipulation problems in which there is only one rigid object, the manipulators are fully actuated, and the PWA dynamics is caused by manipulators making and breaking contacts with the object.
We track the system trajectory by solving LP online.


\section{Problem Statement and Approach}
In many manipulation problems, the goal is for the manipulator to change the pose of an object from the initial pose to some target pose(s) specified by the user, for example, using a parallel gripper to flip a half-cylinder from the pose with the flat surface facing upwards to the pose with the flat surface facing downwards.
We incorporate the pose of the manipulator and the pose of the object into the system state $x$.
Then the manipulation problem is turned into a control problem:
Design a feedback controller $u(x)$ that drives the initial state $x_0$ to a target region $X_G$.
The dynamics of the manipulator-object system, $\dot{x} = f(x,u)$, is nonlinear and non-smooth.

Our algorithm contains both offline planning and online execution phases.
During the offline planning phase, we formulate a nonlinear trajectory optimization problem that drives $x_0$ to a target state $x_N \in X_G$ in $N$ time steps.
We solve the trajectory optimization using off-the-shelf numerical solvers. 
The solution is a nominal trajectory $\{\bar{x}_0,\bar{u}_0,\ldots,\bar{x}_{N-1},\bar{u}_{N-1}, \bar{x}_N\}$.
This is an open-loop trajectory and might be fragile under external disturbances.
We build funnels around (the linearization of) the nominal trajectory using polytopes.
If the system state falls into the funnel, the system is guaranteed to reach the target region.
In order to resist larger disturbances, we compute the PWA linearization around the nominal trajectory when the manipulator makes and breaks contacts with the object.
As mentioned before, we assume that there is only one rigid object, that the manipulators are fully actuated, and that the local PWA dynamics is only caused by manipulators making and breaking contacts with the object.
During online execution, we solve linear programs to steer the system into the polytopes or onto the nominal points.

In summary, our algorithm consists of the following three steps: (1) solving nonlinear trajectory optimization to find the nominal trajectory offline; (2) computing the PWA linearization and building a polytopic funnel around the linearization of the nominal trajectory offline; (3) solving LP's online to drive the system around the nominal points.

\section{Trajectory Planning}
\label{sec:trajectory}
\subsection{Trajectory Optimization}
We plan the path using trajectory optimization methods.
In particular, we use direct transcription as in \cite{dai2014whole}.
The continuous-time dynamics $\dot{x} = f(x,u)$ is discretized into a discrete-time system $x^+ = \phi(x,u)$ with sampling time $dt$.
The trajectory is discretized into $N$ time steps with $N\cdot dt = T$, where $T$ is the time horizon:
\begin{align*}
    \text{minimize}\ & \sum_{t = 0}^T L(x[t],u[t])\\
    \text{subject to} \ & m \ddot{{r}}[t] = m{g} + \sum_j {F}_j[t] \\
    & I \ddot{\theta}[t] = \sum_j ({c}_j[k]-{r}[k])\times {F}_j[t] \\
    & \text{friction cone constraints}, \text{contact constraints}, \\
    & \text{kinematics constraints}, \text{time integration constraints}
\end{align*}
where ${r}[t]$ is the position of the center of mass, ${F}_j[t]$ are forces, ${c}_j[t]$ is the the contact position of the $j$-th force for each $j$, $L$ is the loss function, and the decision variables include states $x[t]$, controls $u[t]$, and variables in the contact constraints.
The variables ${r}[t]$ and ${c}_j[t]$ are part of the states $x[t]$, and the forces ${F}_j[t]$ are part of the controls $u[t]$.

The first two constraints are Newton-Euler equations.
The friction cone constraints for planar systems are
    $-\mu F_{j,n} \leq F_{j,t} \leq \mu F_{j,n}$,
where $F_{j,n}$ is the normal force and $F_{j,t}$ is the frictional force.
For 3-dimensional systems, we can use a polyhedral cone to approximate the friction cone \cite{stewart2000implicit}:
    ${F}_j[t] = \sum_{i} \beta_{ij} {w}_{ij}, \beta_{ij} \geq 0$,
where ${w}_{ij}$'s are the spanning vectors of the polyhedral cone.
For some contacts, we formulate the contact constraints as linear complementarity problems (LCP)  \cite{posa2014direct} to fully characterize all possible contact modes -- sticking, sliding, or breaking contacts.
We use IPOPT \cite{wachter2006implementation} to solve the trajectory optimization offline.
Since IPOPT is an interior point method solver, the LCP constraints $P(x)^\top Q(x) = 0, P(x) \geq 0,Q(x) \geq 0$, can be replaced by the equivalent constraints $P(x)^\top Q(x) \leq 0, P(x) \geq 0, Q(x) \geq 0$, and further be relaxed as $P(x)^\top Q(x) \leq \epsilon, P(x) \geq 0, Q(x) \geq 0$ for small $\epsilon > 0$.


\subsection{Force as Control Input}
In the trajectory planning described in the previous subsection, we borrowed the idea of zero-moment point (ZMP) for bipedal footstep planning in the humanoid robot literature \cite{kajita2014introduction, kajita2003biped}. 
For a bipedal robot walking on the ground, ZMP is by definition a point on the ground where the sum of all the tangential moments equals zero.
Although many humanoid robots have pressure sensors on the feet, because of lack of reliability (in the case of Atlas), researchers do not measure ZMP directly. What they do is to plan a joint ZMP and center of mass (CoM) trajectory, and then only track the CoM trajectory during online execution \cite{kuindersma2016optimization}.

Similarly, in the trajectory optimization formulation, we use forces as part of the control input. 
In the manipulation context, the forces are those between the gripper and the object, and those between the object and the environment, e.g. the table.
Although the hardware we are using cannot directly measure the force it applies to the object, we still use forces as control variables to help plan the CoM trajectory of the object as well as the pose trajectory of the gripper.
During online execution, we only track the trajectory of the CoM of the object and that of the gripper.
We find in practice this approach works well.

\section{Local Feedback Control}
\subsection{Local Multi-Contact Dynamics}
\label{sec:pwa}
\begin{figure}[t]
  \begin{minipage}{0.49\linewidth}
  \centering 
  \includegraphics[width=0.7\linewidth]{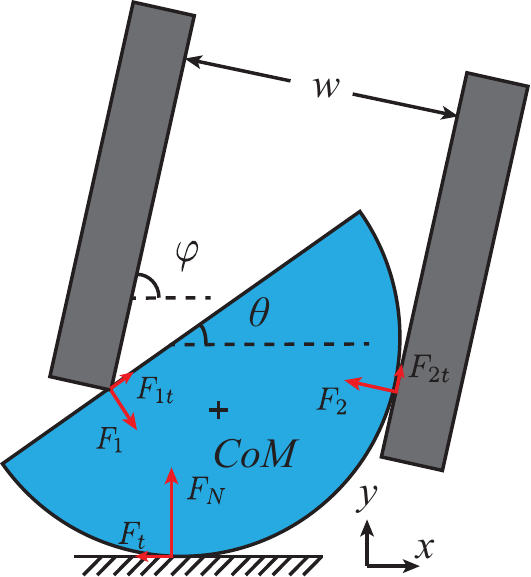}\\ 
  \small (A)
  \end{minipage}\ 
  \begin{minipage}{0.49\linewidth}
  \centering 
  \includegraphics[width=0.7\linewidth]{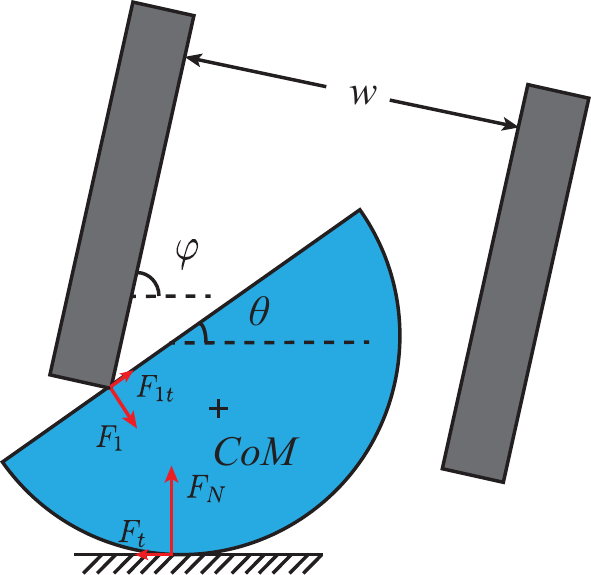} \\
  \small (B)
  \end{minipage}
  \caption{Two contact modes. (A) Right finger in contact with carrot. (B) Right finger not in contact with carrot.}
  \label{fig:contact_modes}
\end{figure}

From the nonlinear trajectory optimization, we obtain a nominal trajectory $\{\bar{x}_0,\bar{u}_0,\bar{x}_1,\bar{u}_1,\ldots,\bar{x}_N\}$, where $\bar{x}_N \in X_G$.
At each nominal point $(\bar{x}_i,\bar{u}_i)$, where $i = 0,\ldots, N$, we linearize the dynamics $\dot{x} = {f}(x,u)$ as $\dot{x} =  {A}_i(x-\bar{x}_i) + {B}_i (u-\bar{u}_i) + {c}_i$, where ${A}_i = {\partial {f} \over \partial x}(\bar{x}_i,\bar{u}_i), {B}_i = {\partial {f} \over \partial u}(\bar{x}_i,\bar{u}_i)$, and ${c}_i = {f}(\bar{x}_i, \bar{u}_i)$.
This continuous time affine system can be discretized as $x^+ =  \tilde{{A}}_i x + \tilde{{B}}_i u + \tilde{{c}}_i$.
This can equivalently be obtained by linearizing the discretized system $x^+ = \phi(x,u)$.
The corresponding state space $X_{i}^1$ is obtained by linearizing all constraints ${g}(x,u) \leq {0}$ at $(\bar{x}_i,\bar{u}_i)$.

Now we consider different contact modes due to making or breaking contacts between an object and the manipulator.
We fix a nominal point $(\bar{x}_i,\bar{u}_i)$.
Suppose there are $p\in \N$ contact locations that may make or break contacts.
Each contact mode corresponds to a distinct dynamics $\dot{x} = {f}_j (x,u)$ and constraints ${g}_j (x,u) \leq {0}$, where $j = 1,\ldots,s := 2^p$, ${f}_1 = {f}$, and ${g}_1 = {g}$.
For the half-cylinder example, the right finger can be touching or not touching the half-cylinder, giving two contact modes with distinct dynamics and distinct state space regions (Figure \ref{fig:contact_modes}).
We call the contact mode in which the nominal trajectory is computed the \textit{nominal mode} or Mode 1.
We linearize the dynamics and the constraints for modes other than the nominal mode and evaluate at $(\bar{x}_i,\bar{u}_i)$.
Since making and breaking contacts can happen when the system state makes very small changes, the linearization is in the vicinity of the nominal point and hence is valid.
In fact, the nominal points can be on the boundaries of state space cells of a few modes.
Thus we obtain a piecewise affine system $x^+ =  {{A}}_{i,j}x + {{B}}_{i,j} u + {{c}}_{i,j} = : {h}_{i,j}(x,u)$ with state space $X_i^j$, $j = 1,\ldots,s$, around each nominal point $(\bar{x}_i,\bar{u}_i)$, $i = 1,\ldots,N$.
We call $x^+ = {h}_{i,1}(x,u)$ the \textit{nominal linearization}.

\label{sec:control}

\subsection{Polytopic Funnel around Nominal Trajectory}
After we get a nominal trajectory, we are going to build a funnel around the trajectory so that if the system state is inside the funnel, it will always stay inside the funnel until reaching the target region.
Funnels can be sum-of-squares (SOS) \cite{tedrake2010lqr,majumdar2017funnel} or polytopic  \cite{sadraddini2019sampling}.
We use the latter, because numerical computations involving polytopes requires solving LP or quadratic program (QP), which are more reliable than solving SOS programs.

Here we briefly review the polytopic tree method in \cite{sadraddini2019sampling}. 
Suppose the system is a time-varying affine system
\[
x_{t+1} = A_t x_t + B_t u_t + c_t.
\]
Given a target region $X_G$ and time horizon $N$, the method computes a trajectory $\{\bar{x}_i,\bar{u}_i\}_{i=0}^N$ alongside with polytopes 
\begin{align}
    Y_i = \{\bar{x}_i\} \oplus G_i \mathbb{P} \label{poly_Y}
\end{align}
and a control law 
\begin{align}
    u_i(x) = \bar{u}_i + \theta_i p(x) \label{poly_u}
\end{align}
around each point $(\bar{x}_i,\bar{u}_i)$ on the trajectory by solving a main LP.
Here $\oplus$ represents Minkowski sum, $G_i$ and $\theta_i$ are decision matrices the main LP searches over, $G_{i+1} = A_iG_i+B_i\theta_i$ captures the evolution of the polytopes over time with respect to the system dynamics, $\mathbb{P}$ is the hyper-cube $[-1,1]^n$, and $p(x) \in \mathbb{P}$ satisfies 
\begin{align}
    x = \bar{x}_i + G_i p(x). \label{poly_x}
\end{align}
The main LP to be solved offline encodes the state space constraints in polytopic containment forms, including the final polytopic containment constraint $Y_N \subseteq X_G$, as well as trying to maximize the volumes of the polytopes.
During online execution, if the current state $x$ is in some polytope $Y_i$, then $p(x)$ can be found by solving an LP through Equation (\ref{poly_x}) and $u_i(x)$ can be calculated by Equation (\ref{poly_u}). 
By following the control law $u_i(x)$, the system is guaranteed to land inside the next polytope $Y_{i+1}$ and hence eventually it will reach $Y_N\subseteq X_G$.

We use the method to build polytopes for the nominal linearization $x^+ = h_{i,1}(x,u)$ around the nominal trajectory $(\bar{x}_i,\bar{u}_i)$.
While \cite{sadraddini2019sampling} deals with PWA systems, we show that the polytopic tree method can be extended to non-smooth nonlinear systems.
\begin{proposition}\label{prop1}
If $x^+ = \phi(x,u)$ and $\phi$ is Lipschitz continuous, and if the polytopic tree method finds the polytopes $Y_i$ as in Equation (\ref{poly_Y}) for the nominal linearization $x^+ = h_{i,1}(x,u)$ at the nominal trajectory, with $\mathbb{P} = [-1,1]^n$ and $G_i$ full rank $\forall i$, then there exists $\mathbb{P}_i = [-a_i,a_i]^n, 0 < a_0 \leq a_1 \leq \cdots \leq a_N = 1$ such that if $x \in \tilde{Y}_i :=  \{\bar{x}_i\} \oplus G_i \mathbb{P}_i$, then by following $u$ as in Equation (\ref{poly_u}), $\phi(x,u) \in \tilde{Y}_{i+1}$. So $x$ will eventually land in $\tilde{Y}_N = Y_N \subseteq X_G$.
\end{proposition}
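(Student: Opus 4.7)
The plan is to prove the proposition by reducing the containment $\phi(x, u) \in \tilde{Y}_{i+1}$ to a scalar inequality on the shrinkage factors $a_i$ and then constructing $\{a_i\}$ by backward recursion from $a_N = 1$. First I would parameterize an arbitrary $x \in \tilde{Y}_i$ as $x = \bar{x}_i + G_i q$ with $q \in a_i \mathbb{P}$; since $G_i$ is full rank (hence invertible in this setting), $q = G_i^{-1}(x - \bar{x}_i)$ is the unique $p(x)$ satisfying (\ref{poly_x}), so the polytopic tree control law (\ref{poly_u}) applies $u = \bar{u}_i + \theta_i q$. Using the recursion $G_{i+1} = A_i G_i + B_i \theta_i$ together with the identity $h_{i,1}(\bar{x}_i, \bar{u}_i) = \bar{x}_{i+1}$, a direct calculation gives $h_{i,1}(x, u) = \bar{x}_{i+1} + G_{i+1} q$. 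Writing $\phi(x, u) = h_{i,1}(x, u) + e_i(x, u)$, the desired containment $\phi(x, u) \in \bar{x}_{i+1} + G_{i+1}(a_{i+1} \mathbb{P})$ reduces to $q + G_{i+1}^{-1} e_i(x, u) \in a_{i+1} \mathbb{P}$, for which a sufficient condition is $\|G_{i+1}^{-1} e_i(x, u)\|_\infty \le a_{i+1} - a_i$.

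Next I would bound the residual $e_i$ uniformly over the candidate tube. Because $\phi$ is Lipschitz, hence continuous, and because $h_{i,1}(\bar{x}_i, \bar{u}_i) = \phi(\bar{x}_i, \bar{u}_i)$ by construction, the residual $e_i$ is continuous with $e_i(\bar{x}_i, \bar{u}_i) = 0$. Define
\[
\delta_i(a) \;:=\; \sup_{\|q\|_\infty \le a}\ \bigl\| G_{i+1}^{-1} e_i\bigl(\bar{x}_i + G_i q,\ \bar{u}_i + \theta_i q\bigr) \bigr\|_\infty.
\]
Then $\delta_i$ is non-decreasing and continuous on $[0, 1]$ with $\delta_i(0) = 0$, so for any prescribed $a_{i+1} > 0$ there exists $a_i \in (0, a_{i+1}]$ with $a_i + \delta_i(a_i) \le a_{i+1}$. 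Assembling the sequence by backward recursion, I would set $a_N = 1$ and, for $i = N-1, \ldots, 0$, pick $a_i$ by this rule. The output is $0 < a_0 \le a_1 \le \cdots \le a_N = 1$, and by the reduction above $\phi$ maps $\tilde{Y}_i$ into $\tilde{Y}_{i+1}$ under the feedback (\ref{poly_u}). Hence every trajectory starting in $\tilde{Y}_0$ reaches $\tilde{Y}_N = Y_N \subseteq X_G$, the last inclusion being part of the polytopic-tree design.

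The main obstacle is controlling $e_i$ when $\phi$ is only assumed Lipschitz, since we cannot invoke the $o(\cdot)$-type decay available for $C^1$ dynamics. The plan circumvents this using only continuity plus the identity $e_i(\bar{x}_i, \bar{u}_i) = 0$: continuity makes $\delta_i$ continuous and monotone in $a$, and the base-point zero guarantees that the backward recursion always admits a strictly positive $a_i$. A related subtlety is that trajectories of $\phi$ may briefly exit the nominal contact mode while stepping from $\tilde{Y}_i$ to $\tilde{Y}_{i+1}$; Lipschitz continuity of $\phi$ across the mode boundaries is exactly what keeps $e_i$ small on a full neighborhood of $(\bar{x}_i, \bar{u}_i)$ irrespective of which mode the state momentarily occupies, and it is this regularity that legitimizes designing the funnel from the nominal-mode linearization alone.
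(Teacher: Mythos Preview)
Your proposal is correct and follows essentially the same strategy as the paper's proof sketch: define the linearization residual $e_i$, use continuity of $\phi$ together with $e_i(\bar{x}_i,\bar{u}_i)=0$ to make the residual arbitrarily small near the nominal point, and construct the $a_i$ by backward recursion from $a_N=1$ so that the slack $a_{i+1}-a_i$ absorbs $G_{i+1}^{-1}e_i$. Your version is in fact more explicit than the paper's sketch, recasting the set-containment condition $e_i \oplus G_{i+1}\mathbb{P}_i \subseteq G_{i+1}\mathbb{P}_{i+1}$ as the scalar inequality $a_i + \delta_i(a_i) \le a_{i+1}$ via the $\ell_\infty$ norm and the invertibility of $G_{i+1}$, but the underlying mechanism is identical.
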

\begin{proof}[Proof Sketch]
If the system state $x \in \tilde{Y}_{N-1} =  \{\bar{x}_{N-1}\} \oplus G_{N-1} \mathbb{P}_{N-1}$, by following $u_{N-1}$, $x^+ \in \{\bar{x}_{N} + e_{N-1}\} \oplus G_{N} \mathbb{P}_{N-1}$, where $e_{N-1}$ is the residual error induced by the linearization.
Since the dynamics $\phi$ is Lipschitz, $e_{N-1}$ goes to 0 as $(x,u)$ goes to $(\bar{x}_{N-1},\bar{u}_{N-1})$. 
Given small $\varepsilon > 0$, we can find $\delta > 0$ such that if $||(x,u)-(\bar{x}_{N-1},\bar{u}_{N-1})||_2 < \delta$, then $||e_{N-1}||_2 < \varepsilon$.
We can choose $\varepsilon$ and $a_{N-1}$ such that any $(x,u)$ satisfying $x \in \tilde{Y}_{N-1}$ and Equation (\ref{poly_u}) also satisfies $||(x,u)-(\bar{x}_{N-1},\bar{u}_{N-1})||_2 < \delta$ and such that $e_{N-1} \oplus G_N \mathbb{P}_{N-1} \subseteq G_N \mathbb{P}_{N}$.
Then $x^+ \in \tilde{Y}_N$.
The proof is complete by repeating the procedure for $i=N-2,\ldots,0$.
\end{proof}
The Proposition says that if the system state is close enough to the nominal trajectory and if the nonlinear dynamics is Lipschitz, then we can use the {affine} control law (\ref{poly_u}) for the nominal linearization of the trajectory to steer the nonlinear system to the target region for sure.
{
This gives us stability guarantees near the vicinity of the nominal trajectory.
In practice, for any arbitrary system, we do not know how large its polytopic funnel can be, or how likely it is for the state to fall into the funnel.}
We want to be able to handle larger external disturbances during online execution.
This is what we are going to {address} in the next section.

\section{Online Execution}
The polytopic tree method in \cite{sadraddini2019sampling} hopes to probabilistically cover the state space by polytopes by growing a single polytopic trajectory to an existing polytopic tree, similar in methodology to the growth of the LQR-trees \cite{tedrake2010lqr}.
The method samples points in the state space, steers sample points to the current polytopic tree as well as building polytopes along the way by solving mixed-integer linear program (MILP), and enlarges the current tree by adding the new polytopes to the tree.

In practice, for example for the half-cylinder flipping experiment, there are several problems with the polytopic tree method.
First, the volumes of the polytopes {can be} very small, and hence a state {may} never fall into any polytope.
Second, the polytopic tree method requires checking the closest polytope online, which is computationally inefficient in the naive implementation when there are large number of polytopes.
Third, the polytopic tree method deals with PWA systems, but our system is nonlinear and computing trajectory from a sample point to the current tree is potentially an expensive nonlinear trajectory optimization problem which cannot be carried out online.

Therefore, we propose the practical improvement of the polytopic tree method, with the sacrifice of stability guarantees when there are large deviations to the nominal trajectory.
We only keep one nominal trajectory, which is computed in Section \ref{sec:trajectory}.
We build polytopes $Y_i = \{\bar{x}_i\} \oplus G_i \mathbb{P}, i=0,\ldots,N$ around the nominal linearization of the trajectory.
This amounts to solving an LP.
During online execution, we compute the closest nominal state $\bar{x}_i$ to the current state $x$ (with respect to some weighted $L_2$ norm) and determine the current contact mode $j$.
If $x$ is inside the polytope $\{\bar{x}_i\} \oplus G_i \mathbb{P}$, the we use the corresponding control law $u_i(x) = \bar{u}_i + \theta_i p(x)$, where $x = \bar{x}_i + G_i p(x)$. 
Otherwise we let the target index be $v = \min\{i+1,N\}$ and solve the following LP to get control $u$:
\begin{align}\label{lp1}
    &\quad \ \min_{\gamma,p,\delta,u} \ \alpha^\top \gamma \\
    &\text{subject to} \ x_{v} + G_{v} p = h_{i,j}(x,u) + \delta \nonumber\\
    & \quad \quad \quad \quad \ p \in \mathbb{P},  |\delta_k| \leq \gamma_k, k=1,\ldots,n  \nonumber
\end{align}
where $\alpha$ is some weight or cost vector.
This LP means we want the state to get to the polytope with index $v$ as close as possible.
When, for example in the half-cylinder flipping experiment, the volumes of the polytopes are very small, we can directly solve the LP
\begin{align}\label{lp2}
    &\quad \quad\min_{\gamma,\delta,u} \ \alpha^\top \gamma \\
    &\text{subject to} \ x_{v} = h_{i,j}(x,u) + \delta \nonumber\\
    & \quad \quad \quad \quad \  |\delta_k| \leq \gamma_k, k=1,\ldots,n  \nonumber
\end{align}
which means we want the state to get to the nominal state with index $v$ as close as possible.

\begin{algorithm}
\caption{Stabilizing controller around nominal trajectory}\label{alg:stabilizing_controller}
\hspace*{\algorithmicindent} \textbf{Input} Current state $x \not\in X_G$ \\
\hspace*{\algorithmicindent} \textbf{Output} Control $u$
\begin{algorithmic}[1] 
    \If {$x \in \{\bar{x}_i\} \oplus G_i \mathbb{P}, \forall i \in I$, for some index set $I$,} \Return $u = \bar{u}_{i_0} + \theta_{i_0} p(x)$, where $x = \bar{x}_{i_0} + G_{i_0} p(x)$, and ${i_0}$ is the largest element in $I$.
    \EndIf
    \State Find the closest nominal state $\bar{x}_i$ to $x$, w.r.t. some weighted $L_2$ norm.
    \State Determine the current contact mode $j$.
    \State Let the target index be $v = \min\{i+1,N\}$.
    \State Solve LP (\ref{lp1}) or (\ref{lp2}), \Return $u$.
\end{algorithmic}
\end{algorithm}

The procedure is summarized in Algorithm \ref{alg:stabilizing_controller}.
There can be many variants to Steps 4 and 5.
For example, one might use MPC-style planning based on local PWA linearization.
During offline phase, one samples states $\tilde{x}_k$ not in the nominal mode and solve MICP to get to some target points $\bar{x}_{v(k)}$ on the nominal trajectory, hence storing a list of samples $\{($state $\tilde{x}_k$, mode sequence to get to target $\bar{x}_{v(k)})\}_{k=1}^M$.
The target index $v(k)$ for each sample state $\tilde{x}_k$ can be chosen by comparing the cost to get to all nominal states $\bar{x}_i, i=0,\ldots,N$.
During online execution, one finds the closest sample $\tilde{x}_k$ to the current state $x$ and solve QP or LP to get to $\bar{x}_{v(k)}$ fixing the mode sequence as stored.

We find empirically that for the half-cylinder flipping experiment, solving LP like (\ref{lp1}) or (\ref{lp2}) to directly go to the nominal trajectory is more efficient than MPC-style planning which plans multiple steps to reach the nominal trajectory.
This might be because of our assumptions that the change of contact modes is only caused by the manipulator making and breaking contacts with the object and that the manipulator is fully actuated.
So instructing the manipulator to directly go back to the desired position works.
Also MPC-style planning on linearized PWA systems may accumulate linearization errors.

During the online execution, we use $\mathbb{P} = [-1,1]^n$ instead of $\mathbb{P}_i$ in Proposition \ref{prop1}, because it is more computationally efficient to use $[-1,1]^n$ and it is hard to compute $\mathbb{P}_i$.
Since $\mathbb{P}_i \subseteq \mathbb{P}$, we know once $x$ happens to fall into $\{\bar{x}_i\} \oplus G_i \mathbb{P}_i$, then the system is guaranteed to reach the target region.

\section{Experiment}

\label{sec:experiment}
\begin{figure*}
    \includegraphics[width=0.12\linewidth]{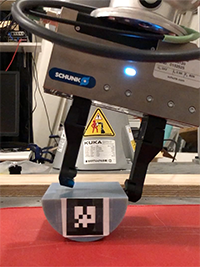}
    \includegraphics[width=0.12\linewidth]{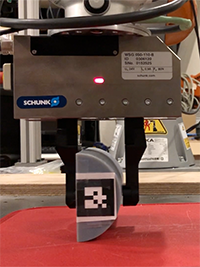}
    \includegraphics[width=0.12\linewidth]{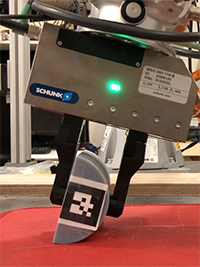}
    \includegraphics[width=0.12\linewidth]{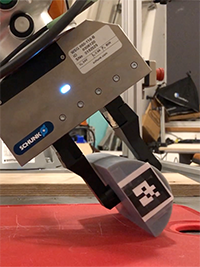}
    \includegraphics[width=0.12\linewidth]{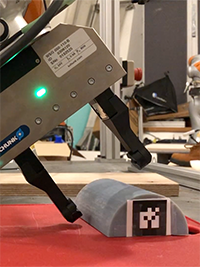}
    \includegraphics[width=0.12\linewidth]{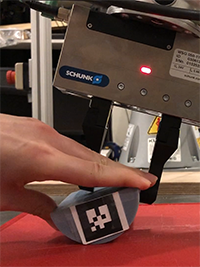}
    \includegraphics[width=0.12\linewidth]{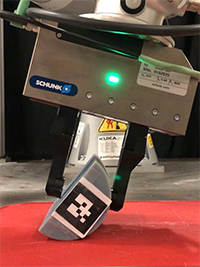}
    \includegraphics[width=0.12\linewidth]{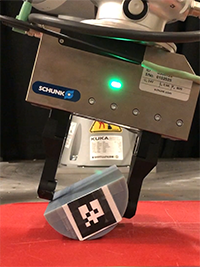}
    
    \quad\quad\ (A) \quad\ \  
    \quad\quad\quad (B) \quad\  \  
    \quad\quad\quad (C) \quad\  \  
    \quad\quad\quad (D) \quad\  \  
    \quad\quad\quad (E) \quad\  \  
    \quad\quad\quad (F) \quad\  \  
    \quad\quad\quad (G) \quad\  \  
    \quad\quad\quad (H) \quad
    \caption{Flipping the half-cylinder}
    \label{fig:flipping_motions}
\end{figure*}

We carried out the experiment on a Kuka robot with a two-finger Schunk gripper. 
The half-cylinder representing the carrot is 0.11 m long and its radius is 0.036 m.
We put an AR-tag \cite{fiala2005artag} with side width 3 cm on a side of the half-cylinder and use a Kinect to track the pose of the half-cylinder.
We can get in real time the pose of the gripper relative to the Kuka base and hence the pose of the gripper relative to the table. 
Once we compute the initial pose of the half-cylinder relative to the table, we can track the pose of the half-cylinder relative to the gripper in real time.
{We determine whether the gripper is in contact of the half-cylinder or not by relative poses, since we do not have any force or tactile sensors.}

The goal is to flip the half-cylinder 180 degrees, i.e., to manipulate the half-cylinder with the flat surface facing upwards to the pose where the flat surface is facing downwards to the table. 
We use our algorithm to design a trajectory that flips the half-cylinder 90 degrees so that the grippers are holding the half-cylinder.
After that, a manually-designed (open-loop) controller would transport the half-cylinder and flip another 90 degrees.
We mainly focus on the first 90-degree rotation for several reasons. First, it is most challenging in the entire manipulation process, and manually designed open-loop controllers usually fail in this phase.
Even an experienced human operator tele-operating the robot cannot accomplish this task in one or two attempts and the failures are often in the first 90-degree phase (see the accompanying video).
Second, in the next 90-degree rotation, the dynamics is different from that in the first 90-degree rotation, so one needs to design a different trajectory separately, instead of simply extending the first trajectory.
Besides, it is very simple to design a good open-loop controller for the next 90-degree rotation (see the accompanying video).

The state is $\textbf{x} = [x,y,\theta, \dot{x},\dot{y},\dot{\theta}, \varphi, w]$, where $x$ and $y$ are the coordinates of CoM of the half-cylinder, $\theta$ is the angle between the flat surface of the half-cylinder and the horizontal axis, $\dot{x},\dot{y},\dot{\theta}$ are the first-order time derivatives of $x,y,\theta$, respectively, $\varphi$ is the angle between the finger of the gripper and the horizontal axis, and $w$ is the separation between two fingers.
The control input is $\textbf{u} = [F_N,F_t, F_1,F_{1t},F_2,F_{2t}, \dot{\varphi}, \dot{w}]$, where $F_N,F_t$ are the normal force and the friction between the half-cylinder and the ground, and similar definitions for $F_1, F_{1t}, F_2, F_{2t}$ (Figure \ref{fig:contact_modes}).
We set up trajectory optimization with time horizon $N = 100$ and sampling time $dt = 0.01$ s.
{We constrain that there is always contact between the left finger and the flat surface of the half-cylinder. The friction coefficients are chosen to be between 0.2 and 0.5.}
The goal state region of the trajectory optimization is $X_G = \{\tbx: \varphi = \theta = 90^\circ \}$.
{The objective is to minimize $\sum_{t=0}^N |\varphi_t - \theta_t|$.}
It took IPOPT about 2 seconds to solve the trajectory optimization on an Intel i7 3.3
GHz, 32 GB RAM machine\footnote{By varying the time horizon or other constraints, the solving time of IPOPT varies from 1 to 25 seconds or IPOPT cannot find a feasible solution. We found that on solving this particular problem with varying constraints, IPOPT generally behaves better than SNOPT \cite{gill2005snopt}.}.

\begin{figure} 
    \centering
    \includegraphics[width=0.23\textwidth]{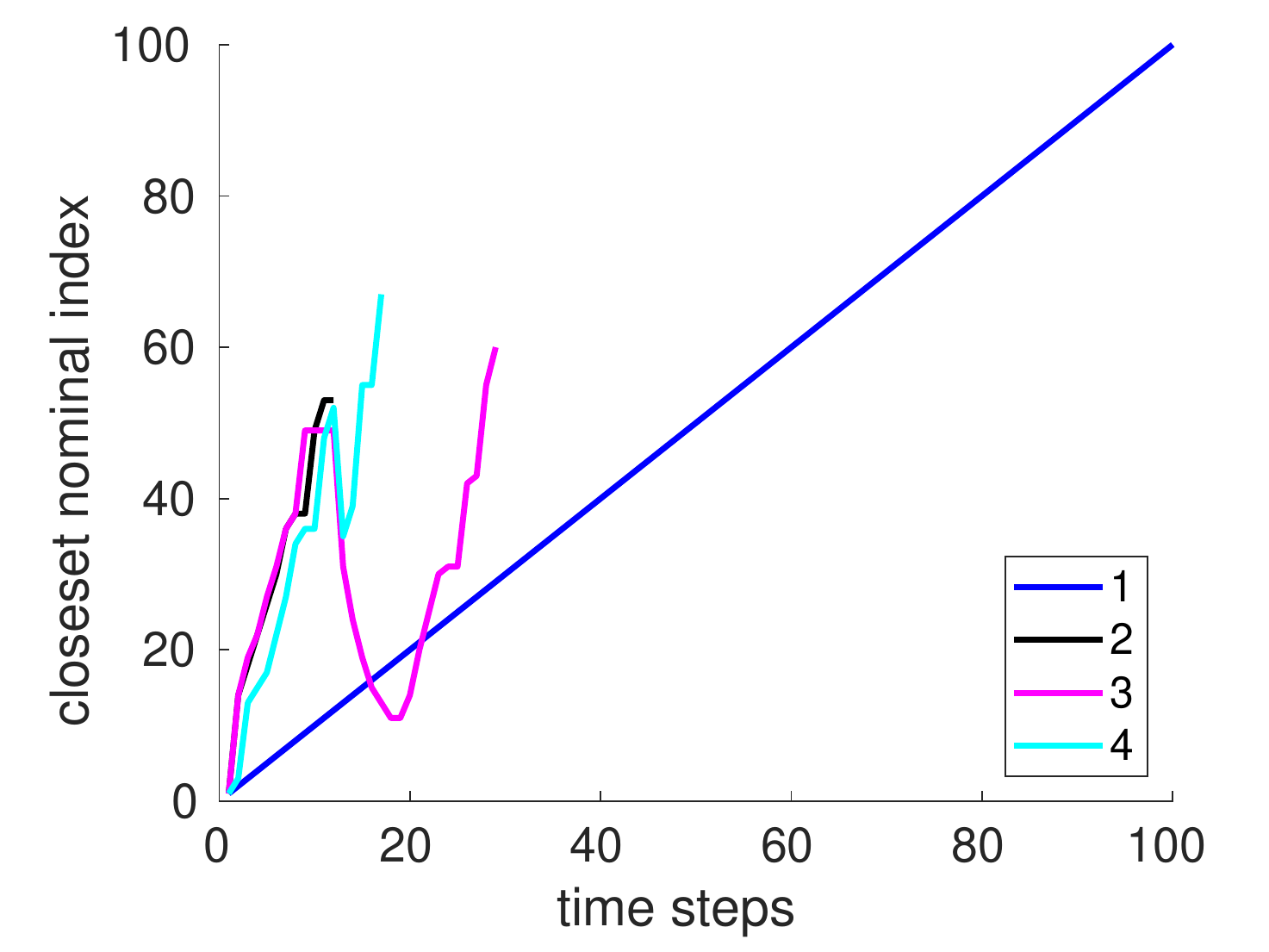}
    \includegraphics[width=0.23\textwidth]{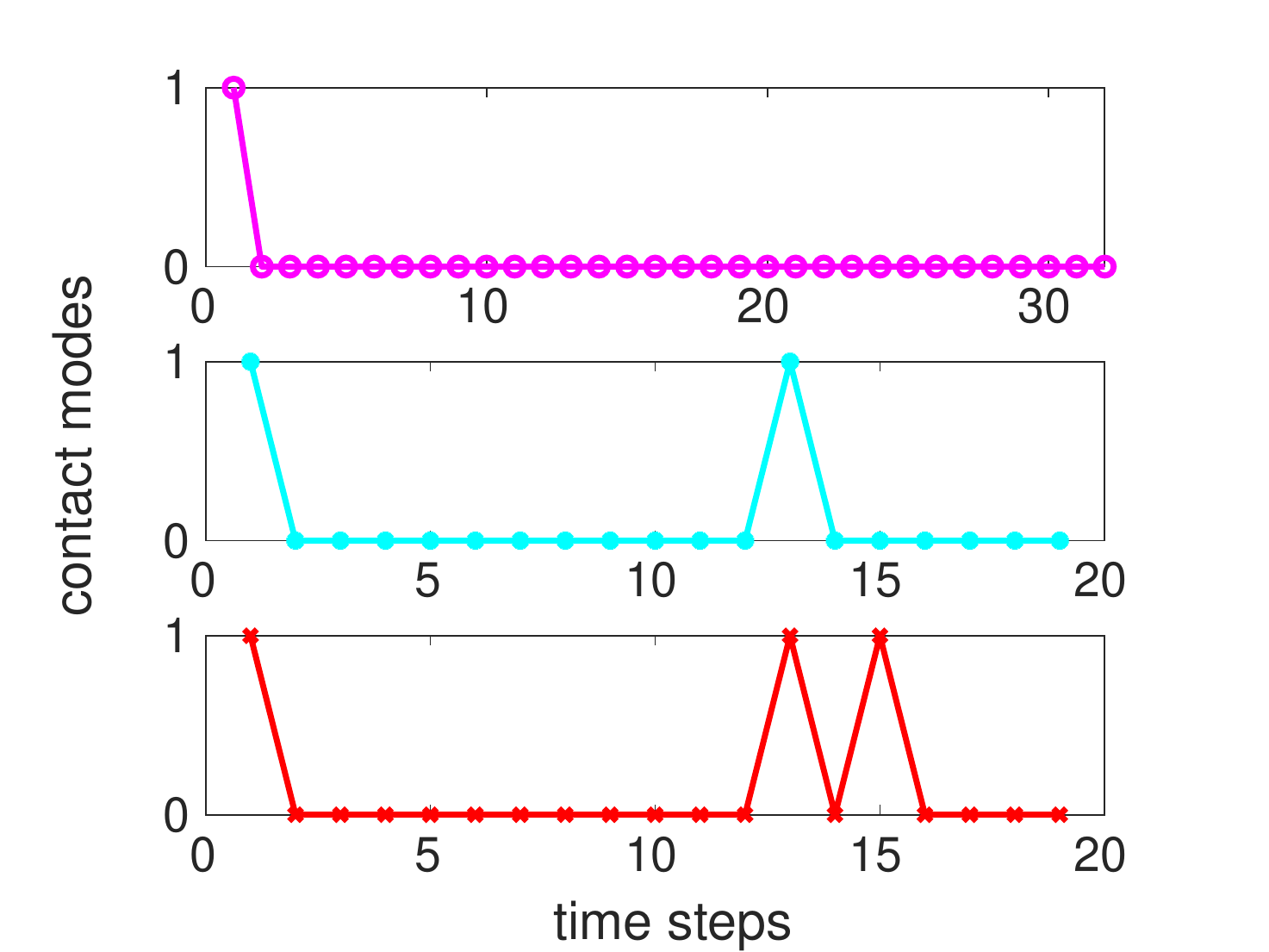}
    \caption{The figure on the left plots the index of the closest nominal trajectory state for the current state \textit{before} reaching the goal region $\widetilde{X}_G$ vs. time steps for four typical trajectories. Trajectory 1 can be thought of as the open loop trajectory and is just for reference. Trajectory 2, 3, and 4 are typical trajectories for the closed-loop controller, the closed-loop controller under the first type of disturbance, and the closed-loop controller under the second type of disturbance, respectively. They terminate early, because $\widetilde{X}_G$ is strictly larger than ${X}_G$. The figure on the right plots the contact modes for 3 typical trajectories. From top to bottom are typical contact modes for the closed-loop controller under the first type of disturbance, the closed-loop controller under the second type of disturbance, and the close-loop controller under the second type of disturbance where the gripper opens 4 cm.}
    \label{fig:two_plots}
\end{figure}

We design controllers in 2D, and in execution the half-cylinder is 3D, so we always operate at the center section of the half-cylinder.
During execution, we let the goal region be $\widetilde{X}_G = \{\tbx: \varphi = \theta, w \leq c \} \supseteq X_G$ for some constant $c$. Once the state is in the goal region, the gripper is in grasp of the half-cylinder, ready for the remaining operations. 
The open-loop controller obtained from trajectory optimization brings the half-cylinder from 0 degree (Fig \ref{fig:flipping_motions}.A) to 90 degrees (Fig \ref{fig:flipping_motions}.B).
Then a manually-designed controller flips the half-cylinder completely (Fig \ref{fig:flipping_motions}.DE).
The closed-loop controller stops early once the goal region has been reached (Fig \ref{fig:flipping_motions}.C), followed by the same manually-designed controller.

The open-loop controller is already very robust during execution, so we tested some larger disturbances to show the robustness of the closed-loop controller.
We experimented two types of disturbances. One is to force the half-cylinder to rotate clockwise using a human hand (Fig \ref{fig:flipping_motions}.F). 
We can easily fail the open-loop controller by holding the half-cylinder long enough so that the controller finishes open-loop execution.
We found that the closed-loop controller always recovers when the disturbances are no more than 30 degrees.
The success rate is 100\% (20 out of 20). 
We observe that the contact modes do not change under small disturbances (Fig \ref{fig:two_plots} Right).
We also observe that the controller can sometimes recover from very large disturbances that violate the assumptions we made when designing the trajectory, e.g., the disturbance is more than 30 degrees and the right finger is on the flat surface of the half-cylinder.
The other type of disturbance is to ``accidentally" open the gripper for 2 cm at a certain time step (before opening: Fig \ref{fig:flipping_motions}.G, after opening: Fig \ref{fig:flipping_motions}.H).
This tests the local multi-contact stabilizing controller.
The success rate is also 100\% (20 out of 20).
The contact modes change when the disturbance happens (Fig \ref{fig:two_plots} Right).
The controller can also recover from large disturbances that violate our design assumptions, e.g., the gripper opens 4 cm at a certain time step, {in which case the left finger moves a long distance along the flat surface of the half-cylinder, or even breaks contact with the half-cylinder.}

\begin{table}
\centering 
\begin{tabular}{|c|c|c|c|}
 \hline
 \multicolumn{2}{|c|}{Disturbance 1} & \multicolumn{2}{|c|}{Disturbance 2}\\
 \hline
 $\approx 15^\circ$ & $\approx 30^\circ$ & at time step 7 & at time step 14\\
 \hline
 10/10 & 10/10 & 10/10 & 10/10\\
 \hline 
\end{tabular}
\caption{Success rate for recovery from two types of disturbances.}
\end{table}

\section{Discussion and Conclusion} 
\label{sec:conclusion}
We have described a locally robust feedback design algorithm for dexterous manipulation. 
We have shown on hardware that the algorithm can recover from large external disturbances.

There are some limitations. First, if the size of the half-cylinder or the object shape changes, we need to {rerun the whole pipeline:} analyzing the dynamics of the system, designing and solving the trajectory optimization, and building the stabilizing controllers offline. Building a large library for many shapes and various sizes is a possible solution to robust manipulation.
Second, we used AR-tag to track the pose of the object. The estimation for the contact points between the gripper and the object are very rough, which causes some problems sometimes. In practice, perception is important for manipulation.
Third, the algorithm does not work for some types of manipulation, for example throwing objects or sorting a deck of cards.
Enabling robots to do more complicated tasks using model-based approaches still remains to be explored.

\bibliographystyle{IEEEtran}
\bibliography{references}

\end{document}